\pdfoutput=1
\documentclass{article}
\usepackage{iclr2027_conference,times}

\usepackage[
  breaklinks=true,
  bookmarks=false,
  colorlinks=true,
  linkcolor=teal,
  urlcolor=teal,
  citecolor=magenta,
]{hyperref}

\usepackage{amsmath}
\usepackage{amsthm}
\usepackage{amssymb}
\usepackage{xcolor}
\usepackage{booktabs}
\AddToHook{env/table/begin}{%
  \setlength{\abovecaptionskip}{0pt}%
  \setlength{\belowcaptionskip}{\medskipamount}%
}

\usepackage{graphicx}
\usepackage{wrapfig}
\usepackage[capitalize]{cleveref}
\crefname{section}{Sec.}{Sec.}
\crefname{subsection}{Sec.}{Sec.}
\crefname{table}{Tab.}{Tab.}
\crefname{figure}{Fig.}{Fig.}

\newtheorem{theorem}{Theorem}
\newtheorem{proposition}{Proposition}

\usepackage{tikz}
\usetikzlibrary{cd}

\title{A Unifying Framework of Concept-based Explainable AI with Completeness Guarantees}

\author{%
\And
Vojt\v{e}ch K\r{u}r
\And
Adam Kuku\v{c}ka
\And
Tom\'{a}\v{s} Br\'{a}zdil
\And
V\'{\i}t Musil
\And
\end{tabular}\parfillskip=0pt\endgraf
\centering
\begin{tabular}[t]{@{}c@{}}
Masaryk University, Faculty of Informatics, Brno, Czechia
}

\DeclareMathOperator{\rmse}{RMSE}
\DeclareMathOperator{\recerror}{RE}
\DeclareMathOperator{\modelerror}{MCE}
\DeclareMathOperator{\fiderror}{FE}
\DeclareMathOperator{\atterror}{ATE}
\DeclareMathOperator{\adderror}{ADD}
\DeclareMathOperator{\id}{id}

\iclrfinalcopy
\begin{document}

\maketitle
\lhead{Preprint}

\begin{abstract}
Concept-based explanations describe neural network predictions through human-understandable properties of inputs called \emph{concepts}.
The field encompasses approaches that differ in how they define and represent concepts and connect them to model predictions.
We introduce a \emph{theoretical framework} that describes these approaches in a common mathematical language and supports a shared analysis of their properties.
For \emph{concept discovery}, which identifies concepts automatically within a latent space of a trained model, we employ a \emph{concept autoencoder} view.
An \emph{encoder} extracts concept representations from the model's latent space, and a \emph{decoder} uses them to reconstruct the original latent representation.
The autoencoder's \emph{reconstruction error} measures how accurately its decoder recovers the original latent representation.
We revisit \emph{model completeness}: how well the concepts can reproduce the model's outputs.
We show that model incompleteness of the concepts can be bounded by the autoencoder's reconstruction error.
The autoencoder view also provides a common way to define individual concept \emph{attributions}, which measure each concept's contribution to a prediction.
We establish when these attributions sum to the model's prediction, and bound the discrepancy otherwise, thus providing \emph{attribution completeness} guarantees.
\end{abstract}

\section{Introduction}
\label{sec:introduction}

Deep learning has driven substantial advances across many tasks through its ability to learn rich representations directly from data~\citep{lecun2015deep}.
However, these representations are not inherently aligned with human-understandable concepts, making the resulting predictions difficult to interpret~\citep{samek2017explainable}.
Explainable artificial intelligence (XAI) seeks to make model behavior and predictions understandable to humans.
Concept-based XAI (C-XAI) approaches this goal through properties of inputs called \emph{concepts}, such as textures, shapes, or object parts~\citep{poeta2023concept}.
For example, the presence of wheels may help explain why an image is classified as a vehicle.

C-XAI encompasses methods that differ in what a concept is, how it is represented, and how it relates to predictions~\citep{poeta2023concept}. 
These different uses and representations of concepts have motivated mathematical frameworks for relating and comparing concept-based methods~\citep{fel2023holistic,li2024readability,poche2025consim}.
We focus on two main problems in bridging C-XAI together: a) a common definition of a \emph{concept}, and b) a suitably general definition of \emph{concept extraction}.
In the following, we present our framework of C-XAI.

\begin{wrapfigure}[9]{r}{0.27\textwidth}
    \centering
    \vspace{-2ex}
    \hspace*{-5pt}%
    \begin{tikzcd}
        \mathcal{X} \arrow[r, "h"] \arrow[rr, bend left=30, "f"] \arrow[d, "c"'] & \mathcal{Z} \arrow[r, "g"] \arrow[d, "\pi"] \arrow[dl, "e"'] & \mathcal{Y} \\
        {[0,1]} & \mathcal{V} \arrow[l, "\sigma"] &
    \end{tikzcd}
    \let\normalsize\small
    \caption{\centering Concept extraction.}
    \label{fig:concept-diagram}
\end{wrapfigure}
We define a concept to be a function $c\colon\mathcal{X}\to[0,1]$, where $\mathcal{X}$ is the input space.
Binary values indicate absence or presence of the concept in the input, while intermediate values allow graded descriptions.
We write a model as $\smash{f=g\circ h\colon\mathcal{X}\xrightarrow{h}\mathcal{Z}\xrightarrow{g}\mathcal{Y}}$, where $h$ maps inputs to latent representations and $g$ maps these representations to predictions.
We define a \emph{concept extractor} as a map $e\colon\mathcal{Z}\to[0,1]$, so that $c=e\circ h$ defines a concept on inputs.
However, existing methods can represent a single concept through a vector or a collection of spatial responses, containing information beyond its scalar presence~\citep{vielhaben2023multidimensional,fel_craft_2023}.
We therefore separate the extracted representation from its activation by writing $e=\sigma\circ\pi$: a \emph{probe} $\pi\colon\mathcal{Z}\to\mathcal{V}$ produces the concept representation, while an \emph{activation function} $\sigma\colon\mathcal{V}\to[0,1]$ measures its presence.
This allows $\mathcal{V}$ to be multidimensional while ensuring that $c=\sigma\circ\pi\circ h$ remains a concept under our input-level definition (\cref{fig:concept-diagram}).

\begin{wrapfigure}[10]{r}{0.25\textwidth}
    \centering
    \vspace{-2.5ex}
    \hspace*{-5pt}%
    \begin{tikzcd}
        \mathcal{X} \arrow[r, "h"] \arrow[rr, bend left=30, "f"] \arrow[dr, "C"'] \arrow[rr, bend right=75, looseness=1.6, "f_A"'] & \mathcal{Z} \arrow[r, "g"] \arrow[d, shift right, "E"'] & \mathcal{Y} \\
        & \mathcal{V} \arrow[u, shift right, "D"'] \arrow[ur, "\gamma"'] &
    \end{tikzcd}
    \let\normalsize\small
    \caption{\centering Concept autoenc.}
    \label{fig:cae-diagram}
\end{wrapfigure} 
Unsupervised concept discovery methods learn probes $\pi_1,\ldots,\pi_k$ that together form an \emph{encoder} $E(z)=(\pi_1(z),\ldots,\pi_k(z))$.
The encoder maps the model's latent space $\mathcal{Z}$ to the joint concept representation space $\mathcal{V}=\mathcal{V}_1\times\cdots\times\mathcal{V}_k= \prod_i\mathcal{V}_i$.
Following earlier frameworks~\citep{fel2023holistic,poche2025consim}, we pair it with a \emph{decoder} $D\colon\mathcal{V}\to\mathcal{Z}$ that reconstructs the latent representation, forming a \emph{concept autoencoder} $A=(E,D)$.
The autoencoder also defines the \emph{induced concept autoencoder model} (ICAM) $f_A=g\circ D\circ E\circ h$.
Denoting $\gamma = g\circ D$ and $C = E \circ h$, we have $f_A = \gamma \circ C$ (\cref{fig:cae-diagram}).

We measure the \emph{reconstruction error} $\recerror(A)$ by the root mean squared error (RMSE) between $h$ and $D\circ E \circ h$ over a distribution of inputs.
Following ICE's comparison of reconstructed and original predictions~\citep{zhang2021invertible}, we measure \emph{fidelity error} $\fiderror(A)$ by the RMSE between $f$ and $f_A$.
We revisit \emph{model completeness}: whether the concept representations $E(h(x))$ retain enough information to recover the model's outputs $f(x)$~\citep{yeh2020completeness}.
We define the \emph{model completeness error} $\modelerror(A)$ as the infimum of the RMSE between $f$ and $\eta \circ C$, over all eligible $\eta \colon \mathcal{V} \to \mathcal{Y}$, i.e.,
\begin{equation}
    \modelerror(A)=\inf_{\eta}\rmse(f, \eta \circ C).
\end{equation}
Choosing $\eta = \gamma$ gives $f_A$, so fidelity error bounds MCE.
If $g$ is $L_g$-Lipschitz continuous, a change in the latent representation produces at most a proportional change in the output.
Together this makes the following chain of inequalities:
\begin{equation}
\label{eq:intro-bound}
    \modelerror(A)\leq \fiderror(A) \leq L_g\recerror(A).
\end{equation}
 
For a scalar output $\mathcal{Y} = \mathbb{R}$ (e.g., a class score), an \emph{attribution function} $\varphi=(\varphi_1,\ldots,\varphi_k)\colon\mathcal{X}\to\mathbb{R}^k$ assigns each concept a score intended to describe its contribution to the prediction~\citep{fel2023holistic,poche2025consim}.
We also revisit \emph{attribution completeness}: whether concept attributions together with a baseline sum to $f(x)$~\citep{sundararajan2017axiomatic, vielhaben2023multidimensional}.
We measure the discrepancy by the \emph{attribution error} $\atterror(A,\varphi)$, the RMSE between $f$ and $f_\varphi$ where $\smash{f_\varphi(x) = b + \sum_{i=1}^k\varphi_i(x)}$ and $b$ is a baseline value.
We consider three attribution rules: occlusion and gradient-times-input, as defined by \citet{fel2023holistic,poche2025consim}, and insertion.

\begin{figure}[tb]
    \centering
    \includegraphics[width=\linewidth]{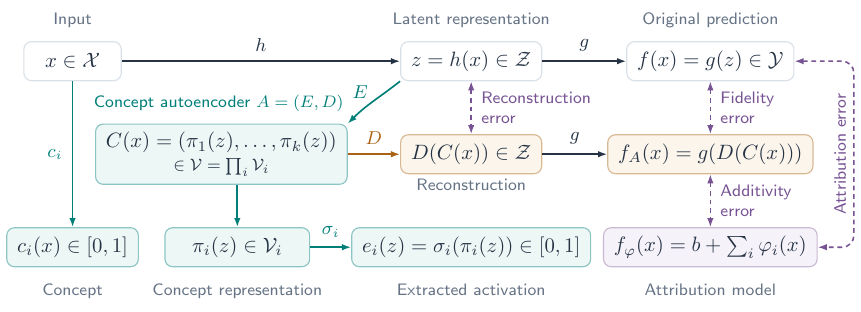}
    \caption{Overview of the framework for a scalar model output.
    The decomposition $f=g\circ h$ connects inputs, latent representations, and predictions.
    Concept probes $\pi_i$ and activation functions $\sigma_i$ form extractors $e_i=\sigma_i\circ\pi_i$ that estimate input-level concepts $c_i$.
    The concept autoencoder $A=(E,D)$ induces the reconstructed predictor $f_A$, while the attribution model $f_\varphi$ adds the concept contributions to a baseline $b$.
    Solid arrows denote function evaluations; dashed links compare the indicated functions by RMSE over the input distribution.
    Model completeness instead considers the best recovery of $f$ from the concept representations over prediction heads.}
    \label{fig:framework-overview}
\end{figure}

The three attribution rules are defined using the concept autoencoder $A$.
For example, occlusion measures the difference between $f_A(x)$ and the same value with the $i$-th concept set to zero.
We make the relationship of $\varphi$ to $A$ explicit using the triangle inequality
\begin{equation}
    \atterror(A, \varphi) = \rmse(f,f_{\varphi}) \leq \rmse(f, f_A) + \rmse(f_A, f_{\varphi}) = \fiderror(A) + \adderror(A, \varphi),
\end{equation}
where $\adderror(A,\varphi) = \rmse(f_A, f_\varphi)$ is the \emph{additivity error}.
In the case that $\gamma$ is affine, for example when $D$ is linear and $g$ is a final linear layer, we show that all three rules are equal and $\adderror(A,\varphi) = 0$.
This means that the attribution rules perfectly attribute the ICAM, and their error with respect to the original model can be bounded in the same way as MCE in \cref{eq:intro-bound}.
Finally, we also show a bound on the additivity error when $\gamma$ is not affine using a bound on the curvature of $\gamma$.
The key terms are summarized in \cref{fig:framework-overview}.

\paragraph{Contributions}
We present a unifying mathematical framework for concept-based explanation methods.
This abstraction enables general completeness guarantees for the concept representations and attributions, under explicit assumptions.
Our contribution is the shared formulation and analysis that support these guarantees, rather than a new explanation method.
Specifically, our contributions are threefold:
\begin{enumerate}
    \item \textbf{A common formulation of concept-based explanation.}
    We distinguish concepts on inputs, their potentially multidimensional representations, and scalar activations within a shared formulation of concept testing, discovery, and bottleneck models.
    \item \textbf{Reconstruction-based model completeness guarantees.}
    We show that fidelity error bounds model completeness error and is itself bounded by reconstruction error under a Lipschitz assumption on the prediction head.
    Exact reconstruction yields zero fidelity and model completeness errors.
    \item \textbf{Attribution completeness through fidelity and additivity.}
    We bound attribution error by fidelity error plus additivity error within the ICAM.
    For affine decoded prediction heads, insertion, occlusion, and gradient-times-input give identical contributions with zero additivity error.
    For nonaffine decoded heads with a Lipschitz gradient, we bound additivity error using curvature and the magnitude of the concept representations.
\end{enumerate}

\section{Proposed Framework}

In this section, we give the formal description of our framework.

\paragraph{Concept}
A \emph{concept} is a function $\mathcal{X} \to [0,1]$, where $\mathcal{X}$ is an \emph{input space}.
In the following, we assume that $\mathcal{X}$ is fixed.

\paragraph{Model}
A \emph{model} is a function $\mathcal{X} \to \mathcal{Y}$, where $\mathcal{Y}$ is the \emph{output space}.
A \emph{decomposition} of a model $f$ is $(\mathcal{Z}, h, g)$ where $\mathcal{Z}$ is a \emph{latent space}, $h \colon \mathcal{X} \to \mathcal{Z}$ is a \emph{latent extractor}, $g\colon \mathcal{Z} \to \mathcal{Y}$ is a \emph{prediction head}, and $f = g\circ h$.
In the following, we assume the model and its decomposition are fixed.

\paragraph{Concept Extractor}
A \emph{concept extractor} is a function $\mathcal{Z} \to [0,1]$.
A \emph{decomposition} of a concept extractor $e$ is $(\mathcal{V}, \pi, \sigma)$, where $\mathcal{V}$ is a \emph{representation space}, $\pi \colon \mathcal{Z} \to \mathcal{V}$ is a \emph{concept probe}, $\sigma \colon \mathcal{V} \to [0,1]$ is a \emph{concept activation function}, and $e = \sigma \circ \pi$.

\paragraph{Concept Autoencoder}
A $k$-\emph{concept autoencoder} is $(E, D)$, where $E = (\pi_1, \ldots, \pi_k)\colon\mathcal{Z} \to \mathcal{V}$ is a $k$-\emph{concept encoder}, $\mathcal{V} = \mathcal{V}_1 \times \cdots \times \mathcal{V}_k$ is a \emph{product representation space}, $\pi_i\colon \mathcal{Z} \to \mathcal{V}_i$ is the $i$-th \emph{concept probe}, and $D \colon \mathcal{V} \to \mathcal{Z}$ is a $k$-\emph{concept decoder}.
An \emph{induced concept autoencoder model} (ICAM) $f_A$ of a concept autoencoder $A=(E, D)$ is $g \circ D \circ E \circ h$.

\section{Overview of C-XAI}
\label{sec:background}

Concept-based explanations differ in how concepts enter the model, how they are obtained, and what the explanation describes~\citep{poeta2023concept}.
Concepts may participate in prediction (\emph{ante-hoc}) or be used to analyze a trained model (\emph{post-hoc}).
They may be specified through annotations or descriptions (\emph{supervised}), or discovered without concept labels (\emph{unsupervised}).
In this section, we show how these methods can be described using our framework; \cref{tab:instantiations} in \cref{app:instantiations} summarizes the instantiations.

\subsection{Explainable-by-Design Models}

Concept-based models use concepts directly in prediction.
We distinguish their architectures independently of how the concepts are obtained or the models are trained.

\begin{wrapfigure}[14]{r}{0.31\textwidth}
    \centering
    \vspace{-2.5ex}
    \hspace*{-5pt}%
    \begin{tikzcd}
        \mathcal{X} \arrow[r, "{(c_1,\ldots,c_k)}"] \arrow[dr, "c_i"'] &[1.6em] {[0,1]^k} \arrow[r, "g"] \arrow[d, "\pi_i", start anchor={[xshift=-2.5pt]south}, end anchor=north] &[-0.6em] \mathcal{Y} \\[-0.5em]
        & |[xshift=-2.5pt]| {[0,1]} &
    \end{tikzcd}
    \let\normalsize\small
    \caption{\centering Bottleneck model.}
    \label{fig:cbm-diagram}
    \bigskip
    \hspace*{-5pt}%
    \begin{tikzcd}
        \mathcal{X} \arrow[r, "h"] \arrow[d, "c_i"'] & \prod_j\mathcal{V}_j \arrow[r, "g"] \arrow[d, "\pi_i"] & \mathcal{Y} \\[-0.5em]
        {[0,1]} & \mathcal{V}_i \arrow[l, "\sigma_i"] &
    \end{tikzcd}
    \caption{\centering Embedding model.}
    \label{fig:cem-diagram}
\end{wrapfigure}
A \emph{concept bottleneck model} (CBM) is a model $f$ paired with a decomposition $(\mathcal{Z}, h, g)$ of $f$ where $\mathcal{Z} = [0,1]^k$ and $h =(c_1,\ldots,c_k)$.
Hence, each $c_i=\pi_i\circ h$ is a concept, where $\pi_i(z)=z_i$ is the $i$-th coordinate projection, and the prediction head $g\colon[0,1]^k\to\mathcal{Y}$ receives their activations (\cref{fig:cbm-diagram}).

A \emph{concept embedding model} (CEM) is a model $f$ paired with a decomposition $(\mathcal{Z},h,g)$ where $\mathcal{Z}=\mathcal{V}_1\times\cdots\times\mathcal{V}_k$ and $k$ concept extractors $e_i$ with decompositions $(\mathcal{V}_i,\pi_i,\sigma_i)$, where $\pi_i(z)=z_i$.
Each probe $\pi_i$ selects a concept representation $\pi_i(h(x)) \in \mathcal{V}_i$, and $\sigma_i\colon\mathcal{V}_i\to[0,1]$ maps it to a scalar activation.
Thus $c_i = \sigma_i \circ \pi_i \circ h$ is a concept (\cref{fig:cem-diagram}).
The prediction head $g$ receives the representations $h(x)$ themselves.
Every CBM is a CEM with $\mathcal{V}_i=[0,1]$ and identity activations.

The probability-based variants of \citet{koh2020concept} and the prototype-presence constructions of ProtoTree and PIP-Net give CBM decompositions~\citep{nauta2021neural,nauta2023pipnet}.
Concept logits, as used in the sequential and joint classification variants of \citet{koh2020concept}, illustrate scalar CEM representations.
This construction also accommodates the scalar projections of Post-hoc CBMs without a residual branch and Label-free CBMs~\citep{yuksekgonul2023posthoc,oikarinen2023labelfree}, and the similarity scores of ProtoPNet and LaBo~\citep{chen2019looks,yang2023language}.
The vector-valued embeddings of \citet{zarlenga2022concept} illustrate multidimensional CEM representations, which can retain predictive information beyond their scalar concept activations.

\paragraph{Example: Supervised CBMs and CEMs}
We express the classification models of \citet{koh2020concept} in our framework.
The user specifies $k$ binary concepts and supplies training data $\{(x_j,y_j,a_j)\}_{j=1}^n$, where $x_j\in\mathcal{X}$ is an input, $y_j\in\{1,\ldots,N\}$ is its class label, and $a_j\in\{0,1\}^k$ contains its concept annotations.
The entry $a_{j,i}$ indicates whether concept $i$ is present in input $x_j$.

In the probability-bottleneck variant, the method learns concept predictors $\hat c_i\colon\mathcal{X}\to[0,1]$ and a prediction head $g\colon[0,1]^k\to\mathbb{R}^N$ producing class logits, so $\mathcal{Y}=\mathbb{R}^N$.
Concept annotations encourage $\hat c_i(x_j)\approx a_{j,i}$, while class labels encourage $g(\hat c_1(x_j),\ldots,\hat c_k(x_j))$ to predict $y_j$.
The resulting model predicts entirely through these concept probabilities and therefore has the CBM decomposition $\bigl([0,1]^k,(\hat c_1,\ldots,\hat c_k),g\bigr)$.

The other variants instead learn a concept-logit predictor $h\colon\mathcal{X}\to\mathbb{R}^k$ and a prediction head $g\colon\mathbb{R}^k\to\mathbb{R}^N$ that receives the logits directly.
This gives the model decomposition $(\mathbb{R}^k,h,g)$.
The architecture uses the fixed sigmoid $s(t)=(1+\exp(-t))^{-1}$ to obtain concept probabilities from the logits.
In our notation, $\mathcal{V}_i=\mathbb{R}$, the probe $\pi_i(z)=z_i$ selects the representation of concept $i$, and $\sigma_i=s$ gives its activation.
Concept supervision encourages $e_i(h(x_j))=s(h(x_j)_i)\approx a_{j,i}$, while the prediction head receives $h(x_j)$ itself.
The decomposition together with these extractors is therefore a CEM in our terminology.

\subsection{Post-hoc Concept-Based Explanations}

Post-hoc methods explain a fixed model $f$ with decomposition $(\mathcal{Z},h,g)$ by identifying concepts in its latent representations and examining their relation to predictions.

\subsubsection{Concept Detection}

Concept detection asks whether a trained model's latent representations $h(x)$ retain information about a concept of interest.
For a user-specified concept $c\colon\mathcal{X}\to[0,1]$, we seek an extractor $e=\sigma\circ\pi$ such that $e(h(x))\approx c(x)$.
Annotated examples allow this correspondence to be learned and evaluated.

The most direct approach associates concepts with existing model units.
For $\mathcal{Z}=\prod_i\mathcal{V}_i$, projection probes $\pi_i(z)=z_i$ select neurons, spatial feature maps, or groups of units, whose concept responses are described by $\sigma_i$.
This gives a CEM interpretation of the existing model decomposition.
Network Dissection evaluates channel--concept associations against annotated spatial masks~\citep{bau2017network}.
CRP similarly treats channels as concept representations~\citep{achtibat2023attribution}.

More generally, learned probes $\pi_i\colon\mathcal{Z}\to\mathcal{V}_i$ can use the full latent representation.
We can compare learned probes to study relationships between concepts, as in Net2Vec~\citep{fong2018net2vec}, or evaluate class-score sensitivity along concept directions, as in TCAV~\citep{kim2018interpretability}.
CAR extends detection to nonlinear probes and studies concept--class associations and input features supporting concept detection~\citep{crabbe2022car}.

\paragraph{Example: TCAV}
The user supplies a trained classifier $f\colon\mathcal{X}\to\mathbb{R}^N$ and selects a layer giving the decomposition $(\mathbb{R}^d,h,g)$.
They specify a binary concept $c$ through labeled examples $\{(x_j,a_j)\}_{j=1}^n$, where $a_j=c(x_j)\in\{0,1\}$.
In our framework, TCAV~\citep{kim2018interpretability} uses an affine probe $\pi(z)=\langle w,z\rangle+\beta$ with representation space $\mathcal{V}=\mathbb{R}$.
Taking $\sigma(t)=\mathbf{1}\{t>0\}$ gives the concept extractor $e=\sigma\circ\pi$.
The method learns $w$ and $\beta$ so that $e \circ h$ is as close as possible to $c$ on the labeled examples.

\subsubsection{Unsupervised Concept Discovery}
\label{sec:background-unsupervised}

Unsupervised concept discovery constructs concept representations without annotations specifying their meanings.
We use our framework to describe methods that admit reconstruction, followed by a concrete example using ICE.
For a fixed model decomposition $(\mathcal{Z},h,g)$, these methods construct probes $E=(\pi_1,\ldots,\pi_k)$ and a decoder $D$, forming a concept autoencoder $A=(E,D)$~\citep{fel2023holistic,poche2025consim}.
Many methods fit these functions by minimizing reconstruction error together with regularization or constraints~\citep{fel2023holistic,bhusal2025face}.
Other constructions provide exact reconstruction algebraically~\citep{vielhaben2023multidimensional}.
The discovered representations are then interpreted through representative images, spatial responses, or concept-specific saliency maps, allowing users to identify and name the concepts~\citep{zhang2021invertible,fel_craft_2023}.

The decoder also identifies the predictor whose concept contributions are being explained.
Writing $C=E\circ h$ and $\gamma=g\circ D$, the reconstructed predictor is the ICAM $f_A=\gamma\circ C$.
When activation functions $\sigma_i$ are specified, this gives the CEM decomposition $(\mathcal{V},C,\gamma)$, with input-level concepts $c_i=\sigma_i\circ\pi_i\circ h$.
Standard feature-attribution methods can therefore treat each concept representation as a feature block of this predictor.
For example, insertion and occlusion modify blocks of $C(x)$ before evaluating $\gamma$, while gradient-times-input combines these blocks with derivatives of $\gamma$~\citep{fel2023holistic,poche2025consim}.
These attributions describe the reconstructed prediction $f_A(x)$, which may differ from the original prediction $f(x)$.
The next section uses this distinction to study how well the concept representations and their attributions recover the original model's outputs.

\paragraph{Example: ICE}
We express ICE's NMF construction in our framework~\citep{zhang2021invertible} (see also \cref{fig:ice-diagram} in \cref{app:instantiations}).
The user supplies a trained classifier with $N$ classes, selects a class $r\in\{1,\ldots,N\}$, provides inputs $X=(x_1,\ldots,x_n)\in\mathcal{X}^n$ from that class, and chooses the number of concepts $k$.
We explain the class logit $f\colon\mathcal{X}\to\mathbb{R}$ using the nonnegative feature maps preceding global average pooling and the final affine classifier.
This gives the decomposition $(\mathcal{Z},h,g)$ with $\mathcal{Z}=\mathbb{R}_{\geq0}^{S\times d}$, where $S$ is the number of spatial locations and $d$ is the number of feature channels.
The prediction head is $g(z)=b+\frac{1}{S}\sum_{s=1}^{S}\langle w,z_{s,:}\rangle$, where $w\in\mathbb{R}^d$ and $b\in\mathbb{R}$ are the trained classifier's parameters for class $r$.
The original model remains fixed.

ICE stacks the supplied feature maps into $Z\in\mathbb{R}_{\geq0}^{nS\times d}$ and fits a dictionary $U\in\mathbb{R}_{\geq0}^{k\times d}$ and coefficients $H\in\mathbb{R}_{\geq0}^{k\times nS}$ by minimizing $\|Z-H^\top U\|_F^2$.
Each row of $U$ is a learned concept direction.
Once $U$ is fixed, a new input receives one coefficient per concept and spatial location.
We therefore take $\mathcal{V}_i=\mathbb{R}_{\geq0}^{S}$ and identify $\mathcal{V}=\prod_i\mathcal{V}_i$ with $\mathbb{R}_{\geq0}^{k\times S}$ by placing concept representations in rows.
ICE's reconstruction operation is our decoder $D(V)=V^\top U$.
Its numerical procedure for fitting coefficients with $U$ fixed gives our encoder $E(z)$, seeking to minimize $\|z-D(V)\|_F^2$ over $V\in\mathcal{V}$.
The probes are $\pi_i(z)=E(z)_{i,:}$, so each concept representation is an entire spatial coefficient map.
ICE uses these maps to highlight image regions and their spatial means to select representative images for interpreting the concepts.

For an input $x$, ICE multiplies the spatial mean of $\pi_i(h(x))$ by the class-specific weight $(Uw)_i$ to obtain concept $i$'s contribution.
Its explanation displays these quantities as the concept's similarity score, weight, and contribution.
Our general results in \cref{sec:cg-attribution} explain why these contributions, together with the classifier bias, recover the reconstructed logit and how reconstruction quality controls their agreement with the original prediction.

\section{Completeness Guarantees}
\label{sec:completeness-guarantees}

We connect reconstruction quality to fidelity, model completeness, and attribution completeness.
The results apply to the fitted encoder and decoder, independently of their learning objective or optimization procedure.

\paragraph{Common Assumptions and Notation}
Fix a model $f$ with decomposition $(\mathcal{Z},h,g)$, a concept autoencoder $A=(E,D)$, and a probability distribution $p$ on $\mathcal{X}$.
Write $C=E\circ h$ and $\gamma = g \circ D$, thus $f_A = \gamma \circ C$.
Assume that $\mathcal{Y}=\mathbb{R}^N$, $\mathcal{Z}\subseteq\mathbb{R}^d$, that all relevant functions are measurable, and that the errors being bounded are finite.
For functions $f_1,f_2\colon\mathcal{X}\to\mathbb{R}^m$, define
\begin{equation}
\label{eq:cg-rmse}
    \rmse(f_1,f_2)
    =\sqrt{\mathbb{E}_{x\sim p}
      \bigl[\|f_1(x)-f_2(x)\|_2^2\bigr]}.
\end{equation}
All errors use this distribution; choosing $p$ uniform on a dataset gives empirical guarantees on that dataset.
Further assumptions are introduced where needed.

We use RMSE because it aggregates discrepancies across inputs while retaining the scale of the quantities being compared.
It satisfies $\rmse(f_1,f_2)=\|f_1-f_2\|_{L^2(p)}$, where the $L^2(p)$ norm is defined on square-integrable functions identified up to equality $p$-almost surely.
We can therefore use properties of this norm, including the triangle inequality.
The Euclidean norms measure discrepancies between individual representations in $\mathcal{Z}$ or outputs in $\mathcal{Y}$, while the $L^2(p)$ norm turns these pointwise discrepancies into distances between functions.
More general norms could be used with corresponding regularity assumptions; we adopt this setting to keep the statements and proofs easy to follow.

\subsection{Fidelity}
\label{sec:cg-fidelity}

The \emph{fidelity error} measures output disagreement between $f$ and $f_A$.
Thus we define $\fiderror(A)=\rmse(f,f_A)$.
The \emph{reconstruction error} measures the disagreement between $z$ and $D(E(z))$ where $z = h(x)$, so we define $\recerror(A)=\rmse(h,D\circ C)$.

Now observe that if $h(x) = D(E(h(x)))$, then
\begin{equation}
    f(x) = g(h(x)) = g(D(E(h(x)))) = f_A(x).
\end{equation}
From this, we can easily see that if $\recerror(A) = 0$, then $\fiderror(A) = 0$ as well.
For approximate reconstruction, assume that $g$ is $L_g$-Lipschitz for a finite constant $L_g\geq0$:
\begin{equation}
\label{eq:cg-lipschitz}
    \|g(z)-g(z')\|_2\leq L_g\|z-z'\|_2
    \qquad\text{for all }z,z'\in\mathcal{Z}.
\end{equation}
This controls how strongly the prediction head can amplify reconstruction errors.

\begin{theorem}[Reconstruction controls fidelity]
\label{thm:cg-fidelity}
Under the Lipschitz assumption in \cref{eq:cg-lipschitz},
\begin{equation}
\label{eq:cg-fidelity-bound}
    \fiderror(A)\leq L_g\recerror(A).
\end{equation}
\end{theorem}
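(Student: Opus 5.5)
The plan is a pointwise Lipschitz estimate followed by taking expectations. Fix an input $x\in\mathcal{X}$ and write $z=h(x)\in\mathcal{Z}$. We need $D(E(z))\in\mathcal{Z}$ so that \cref{eq:cg-lipschitz} applies; this holds because the concept autoencoder was defined with $D\colon\mathcal{V}\to\mathcal{Z}$. Since $f(x)=g(h(x))$ and $f_A(x)=g(D(C(x)))$, the Lipschitz assumption gives the pointwise bound
\begin{equation*}
    \|f(x)-f_A(x)\|_2=\|g(h(x))-g(D(C(x)))\|_2\leq L_g\|h(x)-D(C(x))\|_2 .
\end{equation*}

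Next we square both sides; this is legitimate because both sides are nonnegative. We then take the expectation over $x\sim p$. The relevant functions are measurable by the common assumptions, so the expectations are defined, and monotonicity of the integral yields
\begin{equation*}
    \mathbb{E}_{x\sim p}\bigl[\|f(x)-f_A(x)\|_2^2\bigr]\leq L_g^2\,\mathbb{E}_{x\sim p}\bigl[\|h(x)-D(C(x))\|_2^2\bigr].
\end{equation*}
Finally, we take square roots, using that $t\mapsto\sqrt{t}$ is monotone and that $L_g\ge0$. Comparing with \cref{eq:cg-rmse}, the left side is $\fiderror(A)=\rmse(f,f_A)$ and the right side is $L_g\,\rmse(h,D\circ C)=L_g\recerror(A)$, which is \cref{eq:cg-fidelity-bound}.

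The only potential obstacle is integrability: the right-hand side could be infinite. We do not need to handle this separately. If $\recerror(A)=\infty$, the inequality holds trivially. Otherwise the pointwise bound shows that $f-f_A$ is square-integrable, so $\fiderror(A)$ is finite as well, consistent with the standing assumption that the errors being bounded are finite. No further results from the paper are needed.
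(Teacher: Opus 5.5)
Your proof is correct and follows essentially the same route as the paper: the pointwise Lipschitz bound $\|f(x)-f_A(x)\|_2\leq L_g\|h(x)-D(C(x))\|_2$, then squaring, taking expectations by monotonicity, pulling out $L_g^2$, and taking square roots. Your remarks that $D(C(x))\in\mathcal{Z}$ and on integrability are harmless additions that the paper leaves implicit.
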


For a proof see \cref{app:cg-fidelity}.

This result reflects the construction of the ICAM: $f_A$ replaces $h(x)$ by its reconstruction $D(E(h(x)))$ before applying the same prediction head $g$.
The usefulness of this dwells in the formalization, as we show this is a stepping stone to more interesting results.
The Lipschitz assumption holds for many common neural-network heads, including finite compositions of affine maps and Lipschitz activations such as ReLU.
For deeper compositions, products of the individual layer bounds give a valid but potentially conservative constant.

\subsection{Model Completeness}
\label{sec:cg-model}

Model completeness asks how well the original outputs can be recovered from $C(x)$ when the prediction head is allowed to vary.
Let then $\Gamma$ contain all functions $\mathcal{V}\to\mathbb{R}^N$.
We define the \emph{model completeness error} as $\modelerror(A) =\inf_{\eta\in\Gamma}\rmse(f,\eta\circ C)$.

Now, observe that $g\circ D\in\Gamma$ is an admissible head.
The definition of the infimum therefore gives $\modelerror(A)\leq\rmse(f,g\circ D\circ C)=\fiderror(A)$.
Combining this with \cref{thm:cg-fidelity}, we get the following result.

\begin{theorem}[Reconstruction controls completeness]
\label{thm:cg-mce-fidelity-rec}
Under the Lipschitz assumption in \cref{eq:cg-lipschitz},
\begin{equation}
\label{eq:cg-mce-fidelity-rec}
    \modelerror(A) \leq \fiderror(A) \leq L_g\recerror(A).
\end{equation}
\end{theorem}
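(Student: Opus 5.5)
The statement splits into two inequalities, and I would prove them separately before chaining them. The right-hand inequality $\fiderror(A)\leq L_g\recerror(A)$ is exactly \cref{thm:cg-fidelity}, which I take as given. For completeness of the plan, its proof is pointwise. For each $x$, apply the Lipschitz assumption \cref{eq:cg-lipschitz} with $z=h(x)$ and $z'=D(C(x))$. This gives $\|f(x)-f_A(x)\|_2\leq L_g\|h(x)-D(C(x))\|_2$. Squaring, taking $\mathbb{E}_{x\sim p}$ (monotone, and the integrands are measurable by the common assumptions), and then taking square roots yields \cref{eq:cg-fidelity-bound}.

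The one point needing care is that $D(C(x))$ must lie in $\mathcal{Z}$, so that the Lipschitz bound applies. This holds because $D\colon\mathcal{V}\to\mathcal{Z}$ by the definition of a concept decoder.

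For the left-hand inequality $\modelerror(A)\leq\fiderror(A)$, I would exhibit an admissible head. Take $\eta=\gamma=g\circ D$, which is a function $\mathcal{V}\to\mathbb{R}^N$ and hence lies in $\Gamma$. Then $\eta\circ C=g\circ D\circ E\circ h=f_A$, so
\begin{equation*}
\modelerror(A)=\inf_{\eta\in\Gamma}\rmse(f,\eta\circ C)\leq\rmse(f,f_A)=\fiderror(A).
\end{equation*}
This uses only the definition of the infimum as a lower bound. If measurability of heads is required for $\rmse$ to be defined, $\gamma$ qualifies because the relevant functions are assumed measurable.

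Chaining the two inequalities gives \cref{eq:cg-mce-fidelity-rec}. I see no real obstacle here. The only step needing attention is the bookkeeping: the errors are finite by assumption, so the infimum is well defined and the square-root and expectation manipulations are legitimate.
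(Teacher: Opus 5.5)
Your proposal is correct and follows the paper's argument. The paper likewise notes that $g\circ D\in\Gamma$ is an admissible head, so the infimum gives $\modelerror(A)\leq\rmse(f,g\circ D\circ C)=\fiderror(A)$, and then chains this with \cref{thm:cg-fidelity}, whose proof is the same pointwise Lipschitz-then-RMSE argument you sketch.
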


The connection is not immediately obvious, as completeness concerns only the concept representations $E(h(x))$, while reconstruction concerns both $E$ and $D$.
In particular, if $\recerror(A)=0$, then the concept representations are complete.
In this view, the reason is clear: $E$ retains enough information for $D$ to perfectly reconstruct the latent representation almost surely, and thus recover the original output.
This exact-reconstruction conclusion does not require the Lipschitz assumption.
The theorem also gives an additional interpretation to minimizing reconstruction error when learning $A$: under the Lipschitz assumption, it lowers an upper bound on the incompleteness of the concept representations.

\subsection{Attribution Completeness}
\label{sec:cg-attribution}

Take scalar outputs, $N=1$, and assume that each $\mathcal{V}_i\subseteq\mathbb{R}^{m_i}$ contains its ambient zero vector $0_i$, where $m_i\geq1$.
Equip $\mathcal{W}=\prod_i\mathbb{R}^{m_i}$ with its Euclidean inner product and norm.
Write $b=\gamma(0_{\mathcal{V}})$, where $0_{\mathcal{V}}=(0_1,\ldots,0_k)$.
The isolation map
    $P_i(v_1,\ldots,v_k)
    =(0_1,\ldots,0_{i-1},v_i,0_{i+1},\ldots,0_k)$
retains concept $i$; both $P_i v$ and the representation $v-P_i v$ with that concept removed belong to $\mathcal{V}$.

An \emph{attribution} is a function $\varphi=(\varphi_1,\ldots,\varphi_k)\colon\mathcal{X}\to\mathbb{R}^k$ that assigns each concept a contribution to a prediction.
We study three choices:
\begin{align}
\label{eq:cg-concept-attribution}
    \varphi_i(x)
    &=\gamma(P_iC(x))-b
    &&\text{(Insertion)},\\
\label{eq:cg-occlusion-attribution}
    \varphi_i(x)
    &=\gamma(C(x))-\gamma(C(x)-P_iC(x))
    &&\text{(Occlusion)},\\
\label{eq:cg-gradient-input-attribution}
    \varphi_i(x)
    &=\langle\nabla\gamma(C(x)),P_iC(x)\rangle
    &&\text{(Gradient-times-input)}.
\end{align}
Insertion adds a concept to the baseline, occlusion removes it from the full representation, and gradient-times-input pairs its coordinates with their output sensitivities.
For the last rule, $\gamma$ must also be defined and differentiable on an open neighborhood of each $C(x)$ in $\mathcal{W}$, agreeing with $g\circ D$ on its intersection with $\mathcal{V}$.

An attribution function $\varphi$ induces an \emph{attribution model} $f_\varphi\colon\mathcal{X}\to\mathbb{R}$, defined by
\begin{equation}
    f_\varphi(x)=b+\sum_{i=1}^k\varphi_i(x).
\end{equation}
The \emph{attribution error} measures output disagreement between $f$ and $f_\varphi$, so we define $\atterror(A,\varphi)=\rmse(f,f_\varphi)$.
Similarly, the \emph{additivity error} measures output disagreement between $f_A$ and $f_\varphi$, so we define $\adderror(A,\varphi)=\rmse(f_A,f_\varphi)$.

Observe that attributions computed through $\gamma=g\circ D$ concern the ICAM $f_A$, whose predictions may differ from those of the original model $f$.
Our attribution model $f_\varphi$ makes this distinction explicit: its agreement with $f_A$ is measured by additivity error, while its agreement with $f$ is measured by attribution error.
Applying the triangle inequality for RMSE to $f$, $f_A$, and $f_\varphi$ gives
\begin{equation}
\label{eq:cg-attribution-decomposition}
    \atterror(A,\varphi)\leq\fiderror(A)+\adderror(A,\varphi).
\end{equation}
If $\adderror(A,\varphi)=0$, then $f_\varphi=f_A$ almost surely, and hence $\atterror(A,\varphi)=\fiderror(A)$.
Thus, even attributions that perfectly recover the ICAM output retain its disagreement with the original model.
This lets us study additivity within the ICAM separately and use fidelity to connect the resulting guarantees to the original predictions.

In the following proposition, we provide a general case where $\adderror(A,\varphi)=0$.

\begin{proposition}[Complete attributions for affine decoded heads]
\label{prop:cg-affine-attribution}
Suppose that $\gamma$ is affine, i.e., there are $a_i\in\mathbb{R}^{m_i}$ such that $\gamma(v_1,\ldots,v_k)=b+\sum_{i=1}^k\langle a_i,v_i\rangle$ for all $v\in\mathcal{V}$.
Then the attribution $\varphi_i(x)=\langle a_i,\pi_i(h(x))\rangle$ has zero additivity error, i.e., $\adderror(A,\varphi)=0$.
\end{proposition}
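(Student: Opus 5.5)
The plan is to show that $f_\varphi(x)=f_A(x)$ holds for every $x\in\mathcal{X}$, not merely almost surely. Then the integrand in \cref{eq:cg-rmse} vanishes identically and $\adderror(A,\varphi)=\rmse(f_A,f_\varphi)=0$. No Lipschitz or differentiability assumption is needed, and \cref{thm:cg-fidelity} plays no role here.

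Fix $x$ and set $v=C(x)=E(h(x))$. Since $E=(\pi_1,\ldots,\pi_k)$, this point has blocks $v_i=\pi_i(h(x))$, and it lies in $\mathcal{V}$. The affine hypothesis holds on all of $\mathcal{V}$, so we may evaluate it at $v$:
\begin{equation*}
    f_A(x)=\gamma(C(x))=b+\sum_{i=1}^k\langle a_i,\pi_i(h(x))\rangle .
\end{equation*}
The right-hand side equals $b+\sum_i\varphi_i(x)=f_\varphi(x)$ by the definition of the attribution model. Two small points need checking. First, the constant $b$ in the affine representation must coincide with the baseline $b=\gamma(0_{\mathcal{V}})$. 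This follows by evaluating the affine formula at $0_{\mathcal{V}}\in\mathcal{V}$, where every inner product vanishes. Second, the functions involved must be measurable so that the RMSE is well defined. This is covered by the common assumptions.

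As a remark that connects to the surrounding discussion, I would also verify that the given $\varphi$ coincides with the three rules \cref{eq:cg-concept-attribution}--\cref{eq:cg-gradient-input-attribution}. For insertion, $P_iC(x)\in\mathcal{V}$, so $\gamma(P_iC(x))-b=\langle a_i,v_i\rangle$. For occlusion, $v-P_iv\in\mathcal{V}$, and the difference of the two affine expressions leaves only the $i$-th term. For gradient-times-input, $\gamma$ is affine on $\mathcal{V}$, so its gradient should be $(a_1,\ldots,a_k)$, which gives $\langle a_i,v_i\rangle$.

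The only step I expect to need care is this last gradient identification. The gradient is taken of an extension of $\gamma$ to an open neighborhood in $\mathcal{W}$, and it is determined by the affine formula only if $\mathcal{V}$ is rich enough near $C(x)$ to pin down the derivative, for instance if it has nonempty interior or spans the relevant directions. I would handle this by stating that the extension is taken to be the affine formula itself, which is the natural reading of the hypothesis. Either way, this issue lies outside the proposition proper, which needs only the two-line computation above.
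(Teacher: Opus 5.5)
Your proposal is correct and matches the paper's proof, which is the same pointwise computation $f_\varphi(x)=b+\sum_{i=1}^k\langle a_i,\pi_i(h(x))\rangle=\gamma(C(x))=f_A(x)$, giving $\adderror(A,\varphi)=0$. Your check that the affine constant equals the baseline $\gamma(0_{\mathcal{V}})$ is a refinement the paper leaves implicit. The paper resolves the gradient-extension issue you raise the same way you do: its appendix simply uses the affine formula on $\mathcal{W}$.
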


For a proof see \cref{app:cg-affine-attribution}.
Insertion, occlusion, and gradient-times-input all give precisely this attribution for an affine decoded head.
We verify this in \cref{app:cg-affine-attribution-rules}.
Thus, all three rules have zero additivity error, and their attribution error equals fidelity error.

This is an important case because many concept discovery methods use affine decoders.
If $g$ is also affine, as when explaining a class logit through the final affine layer, then $\gamma=g\circ D$ is affine even when the encoder is nonlinear.
\cref{thm:cg-fidelity} therefore gives
\begin{equation}
    \atterror(A,\varphi)=\fiderror(A)\leq L_g\recerror(A)
\end{equation}
for all three rules.
For a scalar head $g(z)=\langle w,z\rangle+\beta$, we may take $L_g=\|w\|_2$, so this bound can be evaluated directly from the classifier weights and reconstruction error.
Small reconstruction error together with a small Lipschitz constant therefore guarantees that the attributions approximately recover the original model output, and exact reconstruction makes them complete.
This conclusion applies to the affine class scores; applying softmax generally makes the decoded head nonaffine.

\paragraph{Bounding the Nonaffinity of $\gamma$}
When explaining an earlier layer, the remaining prediction head $g$ may be nonlinear, and a nonlinear decoder can also make $\gamma=g\circ D$ nonaffine.
The three attribution rules may then give different contributions with nonzero additivity error.
To extend the affine case, we control how far $\gamma$ departs from an affine approximation by bounding the variation of its gradient.

Assume that $\gamma$ is also defined and differentiable on an open set $\Omega\subseteq\mathcal{W}$ containing the segments from zero to $C(x)$ and $P_iC(x)$, and from $C(x)$ to $C(x)-P_iC(x)$, for every $x$ and $i$.
Its values on $\Omega\cap\mathcal{V}$ agree with $g\circ D$.
Assume that $\mathbb{E}_{x\sim p}\bigl[\|C(x)\|_2^4\bigr]<\infty$ and that the gradient of $\gamma$ is $M$-Lipschitz for a finite $M\geq0$:
\begin{equation}
\label{eq:cg-gradient-lipschitz}
    \|\nabla\gamma(v)-\nabla\gamma(w)\|_2\leq M\|v-w\|_2
    \qquad\text{for all }v,w\in\Omega.
\end{equation}

\begin{theorem}[Curvature bounds additivity error]
\label{thm:cg-additivity-curvature}
Under these assumptions, insertion, occlusion, and gradient-times-input satisfy
\begin{equation}
\label{eq:cg-additivity-bound}
    \adderror(A,\varphi)
    \leq M\sqrt{\mathbb{E}_{x\sim p}\bigl[\|C(x)\|_2^4\bigr]}.
\end{equation}
\end{theorem}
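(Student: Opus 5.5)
The plan is to prove a pointwise bound $|f_A(x)-f_\varphi(x)|\leq M\|C(x)\|_2^2$ for every $x$ and each of the three rules. Squaring, taking expectation under $p$ and a square root then gives \cref{eq:cg-additivity-bound} directly:
\[
\adderror(A,\varphi)=\sqrt{\mathbb{E}_{x\sim p}\bigl[|f_A(x)-f_\varphi(x)|^2\bigr]}\leq\sqrt{\mathbb{E}_{x\sim p}\bigl[M^2\|C(x)\|_2^4\bigr]},
\]
where the fourth-moment assumption guarantees finiteness.

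The only analytic tool is the standard first-order Taylor remainder bound for a function with $M$-Lipschitz gradient. If the segment from $u$ to $u+w$ lies in $\Omega$, then
\[
\bigl|\gamma(u+w)-\gamma(u)-\langle\nabla\gamma(u),w\rangle\bigr|\leq\tfrac{M}{2}\|w\|_2^2 .
\]
This follows by writing the left side as $\int_0^1\langle\nabla\gamma(u+tw)-\nabla\gamma(u),w\rangle\,dt$ and applying \cref{eq:cg-gradient-lipschitz}. The other ingredient is the orthogonality of the blocks: with $v=C(x)$ and $v_i=P_iv$, we have $\sum_i v_i=v$ and $\sum_i\|v_i\|_2^2=\|v\|_2^2$. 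The segments required below are exactly the ones the assumptions place in $\Omega$: zero to $v$, zero to $v_i$, and $v$ to $v-v_i$.

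\textbf{Gradient-times-input.} By linearity, $\sum_i\varphi_i(x)=\langle\nabla\gamma(v),v\rangle$. Hence
\[
f_A(x)-f_\varphi(x)=\gamma(v)-\gamma(0)-\langle\nabla\gamma(v),v\rangle ,
\]
which is minus the Taylor remainder of $\gamma$ expanded at $v$ with step $-v$. Its absolute value is therefore at most $\tfrac{M}{2}\|v\|_2^2$.

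\textbf{Insertion.} Expand each term at $0$: $\gamma(v_i)-b=\langle\nabla\gamma(0),v_i\rangle+R_i$ with $|R_i|\leq\tfrac M2\|v_i\|_2^2$. Likewise $\gamma(v)-b=\langle\nabla\gamma(0),v\rangle+R$ with $|R|\leq\tfrac M2\|v\|_2^2$. Summing over $i$, the linear terms cancel, so $|f_A-f_\varphi|\leq|R|+\sum_i|R_i|\leq\tfrac M2\|v\|_2^2+\tfrac M2\sum_i\|v_i\|_2^2=M\|v\|_2^2$.

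\textbf{Occlusion.} Expand $\gamma(v-v_i)$ at $v$, giving $\varphi_i(x)=\langle\nabla\gamma(v),v_i\rangle+R_i'$ with $|R_i'|\leq\tfrac M2\|v_i\|_2^2$. The sum of the linear parts is $\langle\nabla\gamma(v),v\rangle$, which reduces this case to the gradient-times-input case plus the remainders $\sum_iR_i'$. This again gives the bound $\tfrac M2\|v\|_2^2+\tfrac M2\|v\|_2^2=M\|v\|_2^2$.

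The main care point is not the algebra but the domain bookkeeping. Every Taylor expansion must be taken along a segment contained in $\Omega$, where $\gamma$ is differentiable and \cref{eq:cg-gradient-lipschitz} holds; the chosen expansion points ($0$ for insertion, $v$ for occlusion and gradient-times-input) are picked precisely so that only the assumed segments are used. One must also check that $\gamma$ agrees with $g\circ D$ at the endpoints $0$, $v$, $v_i$ and $v-v_i$. These all lie in $\mathcal{V}$, so they coincide with the quantities defining $f_A$, $b$ and $\varphi_i$.
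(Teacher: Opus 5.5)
Your proof is correct and is essentially the paper's argument. It uses the same Taylor remainder bound, the same expansion points ($0$ for insertion, $C(x)$ for occlusion and gradient-times-input), and the same block identity $\sum_i\|P_iC(x)\|_2^2=\|C(x)\|_2^2$. The only difference is cosmetic: the paper passes through an explicit affine model $f_{\mathrm{aff}}(x)=b+\langle a(x),C(x)\rangle$ and applies the triangle inequality at the RMSE level, whereas you apply it pointwise before averaging; this gives the same constant $M$ and makes visible that gradient-times-input already satisfies the bound with $M/2$.
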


A proof is in \cref{app:cg-additivity-curvature}.

\subsection{Empirical Tightness}
\label{sec:cg-empirical}

To assess how conservative these bounds are, we evaluate them on a ResNet-50 trained on ImageNet-1k, split after its last residual stage and before its final affine layer, where the prediction head is affine and $L_g$ is available in closed form.
We fit concept autoencoders with PCA, NMF, $k$-means, a sparse autoencoder, and an autoencoder with a nonlinear decoder, for $k\in\{5,25,50\}$ concepts and three seeds; details and all results are in \cref{sec:cg-tightness}.
All bounds hold in every configuration, but their tightness differs considerably.

The fidelity bound of \cref{thm:cg-fidelity} is conservative: the fidelity error is about a quarter of $L_g\recerror(A)$ before the final layer and about a tenth after the last residual stage, almost independently of the method and of $k$.
The slack has two sources.
First, reconstruction errors lie mostly in directions to which the prediction head is insensitive, whereas $L_g$ accounts for its most sensitive direction.
Second, after the last residual stage, the errors at different spatial locations partly cancel when the head averages over them.

For model completeness, heads fitted to the concept representations reduce the error of the decoded head $g\circ D$ by at most about $9\%$, so $g\circ D$ is already close to the best head we could find.

For attributions, the tightness depends on the decoder.
With the four affine decoders, insertion, occlusion, and gradient-times-input are additive up to floating-point error, so the attribution error equals the fidelity error and the attribution bound is attained, as \cref{prop:cg-affine-attribution} predicts.
With the nonlinear decoder, the attribution bound, which adds the curvature term of \cref{thm:cg-additivity-curvature} to the fidelity error, still holds but exceeds the attribution error by two to three orders of magnitude.

\section{Related Work}  
\label{sec:related-work}

\paragraph{Concepts from Dictionary Learning}
\label{sec:example-holistic}
\citet{fel2023holistic} unify $k$-means, PCA, NMF, and sparse autoencoders through dictionary learning, abstracting methods such as ACE~\citep{ghorbani_towards_2019}, ICE~\citep{zhang2021invertible}, and CRAFT~\citep{fel_craft_2023}.
The same encoder--decoder formulation also accommodates FACE~\citep{bhusal2025face}.
Their coefficient inference is our encoder $E$, their dictionary defines $D(v)=Bv$, and their reconstructed predictor is our ICAM.
Allowing a decoder bias also accommodates Anthropic's sparse autoencoders~\citep{bricken2023monosemanticity,templeton2024scaling}.
Our guarantees apply to these fitted constructions under the stated assumptions, independently of the learning objective or whether optimization reaches a global optimum.
For affine $g$, as in the holistic framework's penultimate-layer analysis, $\gamma=g\circ D$ is affine, so our results apply to insertion and to the occlusion and gradient-times-input rules they consider.
Our analysis complements their optimality guarantees for importance rankings and FACE's bounds on predictive disagreement by connecting reconstruction to model completeness and distinguishing attribution completeness from fidelity.
The formulation also permits nonlinear decoders, with approximate guarantees under the corresponding Lipschitz and curvature assumptions.

\paragraph{Multidimensional Concept Discovery}
\label{sec:related-mcd}
MCD~\citep{vielhaben2023multidimensional} represents concepts by subspaces of feature channels.
Its component projections are our probes, and its decoder sums the components: $D(v_1,\ldots,v_k)=\sum_i v_i$.
For the complete basis decomposition, including the residual, $D\circ E=\id_{\mathcal{Z}}$.
Together with its affine prediction head and blockwise relevance scores, this makes MCD's attribution-completeness relation a special case of exact reconstruction and \cref{prop:cg-affine-attribution}.
Our formulation identifies the structural conditions behind this result and additionally establishes model completeness of the full decomposition for any prediction head.
Our approximate bounds address discarded components and nonlinear heads under the corresponding regularity assumptions.
These distribution-dependent errors complement MCD's geometric completeness score based on classifier weights.
Our product representation spaces also accommodate HU-MCD~\citep{grobrugge2025human} and the vector-valued concept blocks of subspace-aware sparse autoencoders~\citep{dalili2026subspace}, without requiring the same discovery procedure or exact reconstruction.

\paragraph{Completeness and Evaluation}
\label{sec:related-completeness}
\citet{yeh2020completeness} measure completeness through normalized classification accuracy against ground-truth labels, optimizing a decoder for thresholded, normalized concept projections while retaining the original prediction head.
Our model completeness error instead measures recovery of the original outputs $f(x)$ over all measurable heads on $C(x)$, separating information retained by the encoder from the performance of a particular decoder.
Their ConceptSHAP distributes a global completeness score among concepts, whereas our attribution error concerns recovery of individual model outputs.
Other evaluations address semantic readability and perturbation-based faithfulness~\citep{li2024readability}, or whether explanations enable an LLM to simulate model predictions, as in ConSim~\citep{poche2025consim}.
SAEBench further shows that improvements in sparse-autoencoder proxy metrics need not improve practical interpretability evaluations~\citep{karvonen2025saebench}.
Our contribution is a common formulation and conditional guarantees for existing constructions; we do not introduce a discovery algorithm, and our guarantees do not establish semantic readability or successful simulation.

\section{Conclusion}
\label{sec:conclusion}

We introduced a common language for concept-based prediction, detection, and discovery, distinguishing input-level concepts, potentially multidimensional representations, and scalar activations.
For concept autoencoders, exact reconstruction gives zero fidelity and model completeness errors, while a Lipschitz prediction head provides quantitative guarantees for approximate reconstruction.
We also bound attribution error by fidelity error plus additivity error within the ICAM.
Insertion, occlusion, and gradient-times-input have zero additivity error for affine decoded heads, and our curvature bound extends the analysis to smooth nonaffine heads.

These guarantees apply to existing methods through their encoder--decoder structure, independently of their training objectives.
They concern output recovery and attribution completeness; the semantic meaning of concepts requires separate evaluation.

\subsection*{AI Use Statement}

We used generative AI tools to assist with developing and refining the conceptual framework, mathematical statements, and proofs.
We used a generative AI coding assistant to help implement, debug, and optimize the evaluation code.
We also used generative AI to support literature exploration, assist with analyzing experimental results, prepare figures, and draft and revise the structure, language, and formatting of the paper.

The authors reviewed the AI-assisted mathematical arguments, figures, code, and text.
The code was tested on small configurations, and its outputs were checked against theoretical predictions.
AI-assisted text was read and revised by the authors, and every reported number was checked against the logged results.
We take responsibility for the final content of this work, including text, claims, and artifacts produced with the aid of generative AI.

\subsection*{Reproducibility Statement}

All assumptions of our theoretical results are stated explicitly in \cref{sec:completeness-guarantees} and the proofs are either in the main text or in \cref{app:curvature-proofs}.
For the empirical evaluation, \cref{sec:cg-tightness} describes the model, data, boundaries, concept discovery methods, prediction heads, and the estimation of $M$, and reports all results as means over three seeds together with their standard deviations.
The evaluation uses only public resources: the torchvision ResNet-50 weights (\texttt{IMAGENET1K\_V2}) and the ImageNet-1k validation set.
We provide the source code as supplementary material; each run is fully specified by a configuration file and a seed, which determines the sampled images, the split into fitting and evaluation halves, and the initialization of all methods and heads.

\subsubsection*{Acknowledgments}

The work was supported by the Czech Science Foundation (GA\v{C}R) grant no.~26-23981S.
Computational resources were provided by the e-INFRA CZ project (ID:~90140), supported by the Ministry of Education, Youth, and Sports of the Czech Republic.

\appendix
\section{Instantiations of Concept-Based Methods}
\label{app:instantiations}

\cref{tab:instantiations} summarizes how the methods discussed in \cref{sec:background,sec:related-work} fit our framework.
For explainable-by-design models, the latent space is the representation space, $\mathcal{Z}=\mathcal{V}$, and the probes $\pi_i(z)=z_i$ form the identity encoder.
The trivial concept autoencoder $E=D=\id$ then gives $f_A=f$ and $\gamma=g$, so $\modelerror(A)=\fiderror(A)=0$, and attribution completeness depends only on whether $g$ is affine.
Concept detection methods learn probes for individual user-specified concepts without a decoder, so the guarantees of \cref{sec:completeness-guarantees} do not apply to them directly.
All concept discovery methods in the table use affine decoders, so $\gamma$ is affine whenever $g$ is.

\begin{table}[ht]
\caption{Instantiations of concept-based methods in our framework: CBMs~\citep{koh2020concept}, PCBM~\citep{yuksekgonul2023posthoc}, LF-CBM~\citep{oikarinen2023labelfree}, CEM~\citep{zarlenga2022concept}, TCAV~\citep{kim2018interpretability}, CAR~\citep{crabbe2022car}, ACE~\citep{ghorbani_towards_2019}, CRAFT~\citep{fel_craft_2023}, ICE~\citep{zhang2021invertible}, SAE~\citep{bricken2023monosemanticity}, and MCD~\citep{vielhaben2023multidimensional}.
Columns give the latent space $\mathcal{Z}$, the concept representation space $\mathcal{V}_i$, the probe $\pi_i$, the activation function $\sigma_i$, the decoder $D$, and whether the decoded head $\gamma=g\circ D$ is affine; ``if $g$'' means that $\gamma$ is affine whenever $g$ is.
A dash marks a component the method does not specify.
$s$ is the logistic sigmoid, $U$ is a dictionary with concept directions $u_i$ in its rows, NMF encoders compute nonnegative least-squares coefficients with $U$ fixed, and $Q_i$ projects channels onto the $i$-th MCD subspace, with the residual subspace included as a concept.}
\label{tab:instantiations}
\centering
\small
\setlength{\tabcolsep}{4pt}
\begin{tabular}{@{}lllllll@{}}
\toprule
Method & $\mathcal{Z}$ & $\mathcal{V}_i$ & $\pi_i(z)$ & $\sigma_i$ & $D$ & $\gamma$ affine \\
\midrule
\multicolumn{7}{@{}l}{\emph{Explainable by design}} \\
CBM, probabilities & $[0,1]^k$ & $[0,1]$ & $z_i$ & $\id$ & $\id$ & if $g$ \\
CBM, logits & $\mathbb{R}^k$ & $\mathbb{R}$ & $z_i$ & $s$ & $\id$ & if $g$ \\
PCBM, LF-CBM & $\mathbb{R}^k$ & $\mathbb{R}$ & $z_i$ & -- & $\id$ & yes \\
CEM & $\prod_i\mathbb{R}^{2m}$ & $\mathbb{R}^{2m}$ & $z_i$ & $s(\langle w,v\rangle+\beta)$ & $\id$ & no \\
\midrule
\multicolumn{7}{@{}l}{\emph{Concept detection}} \\
TCAV & $\mathbb{R}^d$ & $\mathbb{R}$ & $\langle w,z\rangle+\beta$ & $\mathbf{1}\{t>0\}$ & -- & -- \\
CAR & $\mathbb{R}^d$ & $\mathbb{R}$ & kernel SVM & $\mathbf{1}\{t>0\}$ & -- & -- \\
\midrule
\multicolumn{7}{@{}l}{\emph{Concept discovery}} \\
ACE & $\mathbb{R}^d$ & $\{0,1\}$ & $\mathbf{1}\{i=\arg\min_j\|z-u_j\|\}$ & -- & $U^\top v$ & if $g$ \\
CRAFT & $\mathbb{R}^d_{\geq0}$ & $\mathbb{R}_{\geq0}$ & $E(z)_i$ (NMF) & -- & $U^\top v$ & if $g$ \\
ICE & $\mathbb{R}^{S\times d}_{\geq0}$ & $\mathbb{R}^{S}_{\geq0}$ & $E(z)_{i,:}$ (NMF) & -- & $V^\top U$ & yes \\
SAE & $\mathbb{R}^d$ & $\mathbb{R}_{\geq0}$ & $\mathrm{ReLU}(\langle w_i,z\rangle+\beta_i)$ & -- & $U^\top v+\beta_D$ & if $g$ \\
MCD & $\mathbb{R}^{S\times d}$ & $\mathbb{R}^{S\times d}$ & $zQ_i^\top$ & -- & $\sum_i v_i$ & if $g$ \\
\bottomrule
\end{tabular}
\end{table}

For CEM, we take as the representation of concept $i$ the pair of positive and negative embeddings $(\hat c_i^+,\hat c_i^-)\in\mathbb{R}^{2m}$, the shared scoring function as $\sigma_i$, and include the probability-weighted mixing of the two embeddings in $g$; this mixing makes $g$ nonaffine.
Following \citet{fel2023holistic}, we read ACE's clustering of segment activations as $k$-means dictionary learning with one-hot coefficients.

\paragraph{ICE}
\cref{fig:ice-diagram} summarizes the ICE example of \cref{sec:background-unsupervised}.
Since $D$ and $g$ are both affine, so is the decoded head,
\begin{equation}
\label{eq:app-ice-head}
    \gamma(V)
    =g(V^\top U)
    =b+\frac{1}{S}\sum_{s=1}^{S}\sum_{i=1}^{k}V_{i,s}\langle u_i,w\rangle
    =b+\sum_{i=1}^{k}(Uw)_i\,\bar v_i,
    \qquad
    \bar v_i=\frac{1}{S}\sum_{s=1}^{S}V_{i,s},
\end{equation}
where $u_i$ is the $i$-th row of $U$.
This is the affine form of \cref{prop:cg-affine-attribution} with $a_i=S^{-1}(Uw)_i\mathbf{1}_S$.
Insertion, occlusion, and gradient-times-input therefore all return ICE's contributions $\varphi_i(x)=(Uw)_i\,\bar v_i$, which sum with $b$ to $f_A(x)$, so $\atterror(A,\varphi)=\fiderror(A)$.
The head $g$ is Lipschitz with $L_g=\|w\|_2/\sqrt{S}$, and \cref{thm:cg-fidelity} gives $\fiderror(A)\leq\|w\|_2\recerror(A)/\sqrt{S}$.

\begin{figure}[ht]
    \centering
    \begin{tikzcd}[column sep=large, row sep=large]
        \mathcal{X} \arrow[r, "h"] & \mathbb{R}^{S\times d}_{\geq0} \arrow[r, "g"] \arrow[d, shift right, "E"'] & \mathbb{R} \\[-0.5em]
        \mathbb{R}^{S}_{\geq0} & \mathbb{R}^{k\times S}_{\geq0} \arrow[u, shift right, "D"'] \arrow[l, "\pi_i"'] \arrow[r, "\Phi"'] & \mathbb{R}^{k} \arrow[u, "b+\sum_i"']
    \end{tikzcd}
    \caption{ICE~\citep{zhang2021invertible} as a concept autoencoder.
    The encoder $E$ computes nonnegative coefficients $V=E(z)$ for the fixed dictionary $U\in\mathbb{R}^{k\times d}_{\geq0}$, and the decoder is $D(V)=V^\top U$.
    The probe $\pi_i$ returns the spatial coefficient map of concept $i$, which ICE uses to highlight image regions.
    The prediction head is the class logit $g(z)=b+\frac{1}{S}\sum_s\langle w,z_{s,:}\rangle$, and $\Phi(V)=\bigl((Uw)_i\,\bar v_i\bigr)_{i=1}^k$ returns the concept contributions.
    The right square commutes, $g\circ D=b+\sum_i\Phi_i$, so the contributions sum to $f_A(x)$ exactly; since $D\circ E$ only approximates the identity, they recover $f(x)$ up to $\fiderror(A)$.}
    \label{fig:ice-diagram}
\end{figure}

\section{Proofs for \cref{sec:completeness-guarantees}}
\label{app:curvature-proofs}

We use the notation and assumptions of \cref{sec:completeness-guarantees}.

\subsection{Proof of \cref{thm:cg-fidelity}}
\label{app:cg-fidelity}

\begin{proof}
For every $x\in\mathcal{X}$, Lipschitz continuity gives
\begin{equation}
\label{eq:cg-fidelity-pointwise-bound}
    \|f(x)-f_A(x)\|_2
    =\|g(h(x))-g(D(C(x)))\|_2
    \leq L_g\|h(x)-D(C(x))\|_2.
\end{equation}
Squaring and taking expectations preserves the inequality by monotonicity of expectation.
Linearity of expectation allows $L_g^2$ to be taken outside, and taking square roots gives the result.
\end{proof}

\subsection{Proof of \cref{prop:cg-affine-attribution}}
\label{app:cg-affine-attribution}

\begin{proof}
For every $x\in\mathcal{X}$, the definitions of $f_\varphi$ and $C$ give
\begin{equation*}
    f_\varphi(x)
    =b+\sum_{i=1}^k\varphi_i(x)
    =b+\sum_{i=1}^k\langle a_i,\pi_i(h(x))\rangle
    =\gamma(C(x))
    =f_A(x).
\end{equation*}
Thus $f_\varphi=f_A$, and consequently $\adderror(A,\varphi)=\rmse(f_A,f_\varphi)=0$.
\end{proof}

\subsection{Attribution Rules for Affine Decoded Heads}
\label{app:cg-affine-attribution-rules}

Suppose that $\gamma$ has the affine form in \cref{prop:cg-affine-attribution}.
For insertion, only block $i$ is retained, so
\[
    \gamma(P_iC(x))-b
    =\langle a_i,\pi_i(h(x))\rangle.
\]
For occlusion, subtracting the prediction with block $i$ removed cancels the baseline and all other block contributions:
\[
    \gamma(C(x))-\gamma(C(x)-P_iC(x))
    =\langle a_i,\pi_i(h(x))\rangle.
\]
For gradient-times-input, we use the affine formula on $\mathcal{W}$, whose gradient is the constant vector $(a_1,\ldots,a_k)$.
Consequently,
\[
    \langle\nabla\gamma(C(x)),P_iC(x)\rangle
    =\langle a_i,\pi_i(h(x))\rangle.
\]
All three rules therefore satisfy the attribution formula of \cref{prop:cg-affine-attribution}.

\subsection{Proof of \cref{thm:cg-additivity-curvature}}
\label{app:cg-additivity-curvature}

\begin{proof}
Choose $a(x)=\nabla\gamma(0_{\mathcal{V}})$ for insertion and $a(x)=\nabla\gamma(C(x))$ for occlusion and gradient-times-input.
For each input $x$, define the affine approximation
\begin{equation}
\label{eq:cg-affine-approximation}
    \gamma_{\mathrm{aff},x}(v)=b+\langle a(x),v\rangle,
    \qquad v\in\mathcal{W}.
\end{equation}
The bias is fixed at $b$, so every approximation agrees with $\gamma$ at zero.
Define $f_{\mathrm{aff}}(x)=\gamma_{\mathrm{aff},x}(C(x))$.
The triangle inequality for RMSE gives
\begin{equation}
\label{eq:cg-additivity-split}
    \adderror(A,\varphi)
    \leq\rmse(f_A,f_{\mathrm{aff}})
    +\rmse(f_{\mathrm{aff}},f_\varphi).
\end{equation}

The standard Taylor bound for a function with an $M$-Lipschitz gradient gives
\begin{equation}
\label{eq:cg-remainder-bound}
    |\gamma(v)-\gamma(w)-\langle\nabla\gamma(w),v-w\rangle|
    \leq\frac{M}{2}\|v-w\|_2^2
\end{equation}
whenever the segment from $w$ to $v$ lies in $\Omega$.
We use this bound to control both terms in \cref{eq:cg-additivity-split}.

For the first term, apply \cref{eq:cg-remainder-bound} from zero to $C(x)$ for insertion, and from $C(x)$ to zero for the other two rules.
Using $b=\gamma(0_{\mathcal{V}})$ and the chosen $a(x)$, both cases give
\begin{equation}
\label{eq:cg-full-affine-pointwise}
    |f_A(x)-f_{\mathrm{aff}}(x)|
    =|\gamma(C(x))-b-\langle a(x),C(x)\rangle|
    \leq\frac{M}{2}\|C(x)\|_2^2.
\end{equation}
Squaring and taking expectations preserves the inequality by monotonicity of expectation.
By linearity of expectation, the constant $M^2/4$ can be taken outside, and taking square roots yields
\begin{equation}
\label{eq:cg-full-affine-error}
    \rmse(f_A,f_{\mathrm{aff}})
    \leq\frac{M}{2}
    \sqrt{\mathbb{E}_{x\sim p}\bigl[\|C(x)\|_2^4\bigr]}.
\end{equation}

For the second term, fix $x\in\mathcal{X}$.
The isolated blocks satisfy
\begin{equation}
\label{eq:cg-block-norms}
    C(x)=\sum_{i=1}^k P_iC(x),
    \qquad
    \|C(x)\|_2^2=\sum_{i=1}^k\|P_iC(x)\|_2^2.
\end{equation}
By linearity of the inner product, the affine prediction therefore decomposes as
\begin{equation}
\label{eq:cg-affine-additivity}
    f_{\mathrm{aff}}(x)
    =b+\sum_{i=1}^k\langle a(x),P_iC(x)\rangle.
\end{equation}
We compare each attribution $\varphi_i(x)$ with its corresponding affine contribution $\langle a(x),P_iC(x)\rangle$.

For insertion, applying \cref{eq:cg-remainder-bound} from zero to $P_iC(x)$ gives
\begin{equation*}
    |\gamma(P_iC(x))-b-\langle\nabla\gamma(0_{\mathcal{V}}),P_iC(x)\rangle|
    \leq\frac{M}{2}\|P_iC(x)\|_2^2.
\end{equation*}
For occlusion, applying the same bound from $C(x)$ to $C(x)-P_iC(x)$ and rearranging gives
\begin{equation*}
\begin{aligned}
    &|\gamma(C(x))-\gamma(C(x)-P_iC(x))
      -\langle\nabla\gamma(C(x)),P_iC(x)\rangle|\\
    &\qquad\leq\frac{M}{2}\|P_iC(x)\|_2^2.
\end{aligned}
\end{equation*}
For gradient-times-input, the attribution equals $\langle a(x),P_iC(x)\rangle$ by definition.
Thus, for all three rules,
\begin{equation}
\label{eq:cg-concept-affine-error}
    |\varphi_i(x)-\langle a(x),P_iC(x)\rangle|
    \leq\frac{M}{2}\|P_iC(x)\|_2^2.
\end{equation}
Using the common baseline $b$ and summing over concepts gives
\begin{align*}
    |f_{\mathrm{aff}}(x)-f_\varphi(x)|
    &=\left|\sum_{i=1}^k
      \bigl(\langle a(x),P_iC(x)\rangle-\varphi_i(x)\bigr)\right|\\
    &\leq\sum_{i=1}^k
      |\langle a(x),P_iC(x)\rangle-\varphi_i(x)|\\
    &\leq\frac{M}{2}\sum_{i=1}^k\|P_iC(x)\|_2^2
    =\frac{M}{2}\|C(x)\|_2^2.
\end{align*}
The inequalities use the triangle inequality and \cref{eq:cg-concept-affine-error}, and the last equality uses \cref{eq:cg-block-norms}.
Taking the root mean square as above gives
\begin{equation}
\label{eq:cg-isolated-affine-error}
    \rmse(f_{\mathrm{aff}},f_\varphi)
    \leq\frac{M}{2}
    \sqrt{\mathbb{E}_{x\sim p}\bigl[\|C(x)\|_2^4\bigr]}.
\end{equation}
Substituting \cref{eq:cg-full-affine-error,eq:cg-isolated-affine-error} into \cref{eq:cg-additivity-split} proves
\[
    \adderror(A,\varphi)
    \leq M\sqrt{\mathbb{E}_{x\sim p}\bigl[\|C(x)\|_2^4\bigr]}.
    \qedhere
\]
\end{proof}

\section{Evaluation of Tightness of the Bounds}
\label{sec:cg-tightness}

We evaluate how closely the bounds of \cref{sec:completeness-guarantees} are attained by concept autoencoders obtained with common concept discovery methods.

\paragraph{Setup}
We use a ResNet-50 trained on ImageNet-1k and, for each of three random seeds, $25{,}000$ images drawn from the ImageNet-1k validation set, which define the empirical distribution $p$.
Half of the images are used to fit the concept autoencoders and prediction heads, and all reported errors are computed on the other half.
We split the model at two boundaries: after the last residual stage (\emph{layer4}, $\mathcal{Z}=\mathbb{R}^{S\times d}$ with $S=49$ and $d=2048$) and before the final affine layer (\emph{penultimate}, $\mathcal{Z}=\mathbb{R}^{d}$).
At both boundaries the prediction head is affine, $g(z)=W\bar z+\beta$, where $\bar z=S^{-1}\sum_{s=1}^S z_s$ is the spatial average at layer4 and $\bar z=z$ at the penultimate boundary.
Consequently, the Lipschitz constant of \cref{thm:cg-fidelity} is available in closed form: $L_g=\|W\|_2$ (the largest singular value of $W$) at the penultimate boundary, and $L_g=\|W\|_2/\sqrt{S}$ at layer4, since averaging over $S$ locations shrinks the Euclidean norm on $\mathbb{R}^{S\times d}$ by at least a factor $\sqrt{S}$, with equality for errors that are identical at every location.
We extract $k\in\{5,25,50\}$ concepts with five methods that cover different decoder types: PCA (orthogonal linear decoder), NMF (nonnegative linear decoder), $k$-means (centroid decoder with soft assignments as concept representations), a sparse autoencoder (affine decoder), and an autoencoder with a nonlinear decoder.
Methods acting on spatial latents are applied to each spatial feature vector separately, so each concept has one coefficient per location.
PCA, NMF, and $k$-means use their scikit-learn implementations.
Both autoencoders share the encoder $\mathbb{R}^d\to\mathbb{R}^{4k}\to\mathbb{R}^k$ with a ReLU after each layer; the SAE has the affine decoder $\mathbb{R}^k\to\mathbb{R}^d$ and an $\ell_1$ penalty on the concept representation, and the nonlinear autoencoder has the decoder $\mathbb{R}^k\to\mathbb{R}^{4k}\to\mathbb{R}^d$ with a ReLU after the hidden layer.

\paragraph{Tightness Ratios}
Since the errors differ by orders of magnitude across boundaries and methods, we report each bound through the ratio of its left-hand side to its right-hand side,
\begin{equation}
\label{eq:cg-tightness-ratios}
    \rho_{\fiderror}=\frac{\fiderror(A)}{L_g\recerror(A)},
    \qquad
    \rho_{\atterror}=\frac{\atterror(A,\varphi)}{\fiderror(A)+M\sqrt{\mathbb{E}_{x\sim p}\bigl[\|C(x)\|_2^4\bigr]}}.
\end{equation}
A ratio of one means that the bound is attained, and smaller ratios indicate slack.
The corresponding ratio $\modelerror(A)/\fiderror(A)$ for model completeness cannot be computed, since $\modelerror$ is an infimum over all heads, and we report the upper estimate $\hat\rho_{\modelerror}$ described below.
\cref{tab:cg-tightness} reports the fidelity and model completeness ratios for every boundary, method, and $k$, and \cref{tab:cg-attribution} reports the attribution ratios for the nonlinear autoencoder, the only method for which they differ from one.

\begin{table}[t]
\caption{Tightness of the fidelity and model completeness bounds, mean over three seeds; the standard deviation over seeds is at most $0.012$ for $\hat\rho_{\modelerror}$ and at most $0.002$ for the other ratios.
$\rho_{\fiderror}=\alpha\kappa$ is the tightness of \cref{thm:cg-fidelity}, with alignment $\alpha$ and spatial coherence $\kappa$ from \cref{eq:cg-fidelity-slack}.
$\hat\rho_{\modelerror}$ upper-bounds the tightness of \cref{thm:cg-mce-fidelity-rec}.}
\label{tab:cg-tightness}
\centering
\setlength{\tabcolsep}{12pt}
\begin{tabular}{llrcccc}
\toprule
Boundary & Method & $k$ & $\rho_{\fiderror}$ & $\alpha$ & $\kappa$ & $\hat\rho_{\modelerror}$ \\
\midrule
 &  & 5 & 0.106 & 0.270 & 0.394 & 0.967 \\
 & PCA & 25 & 0.103 & 0.266 & 0.386 & 0.927 \\
 &  & 50 & 0.099 & 0.261 & 0.381 & 0.939 \\
\cmidrule(lr){2-7}
 &  & 5 & 0.106 & 0.269 & 0.395 & 0.975 \\
 & NMF & 25 & 0.102 & 0.264 & 0.389 & 0.960 \\
 &  & 50 & 0.101 & 0.263 & 0.385 & 0.964 \\
\cmidrule(lr){2-7}
 &  & 5 & 0.105 & 0.267 & 0.394 & 0.993 \\
layer4 & $k$-means & 25 & 0.103 & 0.266 & 0.388 & 0.983 \\
 &  & 50 & 0.102 & 0.264 & 0.385 & 0.978 \\
\cmidrule(lr){2-7}
 &  & 5 & 0.106 & 0.270 & 0.392 & 0.968 \\
 & SAE & 25 & 0.102 & 0.265 & 0.386 & 0.934 \\
 &  & 50 & 0.099 & 0.261 & 0.381 & 0.943 \\
\cmidrule(lr){2-7}
 &  & 5 & 0.104 & 0.267 & 0.390 & 0.985 \\
 & Nonlinear AE & 25 & 0.096 & 0.256 & 0.375 & 0.979 \\
 &  & 50 & 0.092 & 0.250 & 0.367 & 0.971 \\
\cmidrule(lr){1-7}
 &  & 5 & 0.270 & 0.270 & $1$ & 0.956 \\
 & PCA & 25 & 0.265 & 0.265 & $1$ & 0.908 \\
 &  & 50 & 0.261 & 0.261 & $1$ & 0.916 \\
\cmidrule(lr){2-7}
 &  & 5 & 0.270 & 0.270 & $1$ & 0.969 \\
 & NMF & 25 & 0.264 & 0.264 & $1$ & 0.944 \\
 &  & 50 & 0.261 & 0.261 & $1$ & 0.949 \\
\cmidrule(lr){2-7}
 &  & 5 & 0.267 & 0.267 & $1$ & 1.000 \\
penultimate & $k$-means & 25 & 0.264 & 0.264 & $1$ & 1.000 \\
 &  & 50 & 0.262 & 0.262 & $1$ & 1.000 \\
\cmidrule(lr){2-7}
 &  & 5 & 0.269 & 0.269 & $1$ & 0.978 \\
 & SAE & 25 & 0.265 & 0.265 & $1$ & 0.916 \\
 &  & 50 & 0.264 & 0.264 & $1$ & 0.916 \\
\cmidrule(lr){2-7}
 &  & 5 & 0.268 & 0.268 & $1$ & 0.975 \\
 & Nonlinear AE & 25 & 0.257 & 0.257 & $1$ & 0.969 \\
 &  & 50 & 0.250 & 0.250 & $1$ & 0.980 \\
\bottomrule
\end{tabular}
\end{table}

\paragraph{Fidelity}
The fidelity bound holds in every configuration, with $\rho_{\fiderror}$ between $0.092$ and $0.106$ at layer4 and between $0.250$ and $0.270$ at the penultimate boundary.
The slack has a structural explanation.
Write $\varepsilon(x)=h(x)-D(C(x))$ for the reconstruction error and $\bar\varepsilon(x)$ for its spatial average, so that $f(x)-f_A(x)=W\bar\varepsilon(x)$.
The bound then follows from two inequalities,
\begin{equation}
\label{eq:cg-fidelity-chain}
    \fiderror(A)
    =\sqrt{\mathbb{E}\|W\bar\varepsilon\|_2^2}
    \leq\|W\|_2\sqrt{\mathbb{E}\|\bar\varepsilon\|_2^2}
    \leq\frac{\|W\|_2}{\sqrt{S}}\sqrt{\mathbb{E}\|\varepsilon\|_2^2}
    =L_g\recerror(A).
\end{equation}
The first holds because $W$ stretches no vector by more than its largest singular value $\|W\|_2$, and the second because averaging over $S$ locations shrinks the Euclidean norm by at least a factor $\sqrt{S}$.
The tightness ratios of the two steps, the \emph{alignment} $\alpha$ and the \emph{spatial coherence} $\kappa$, multiply to the tightness ratio of the bound,
\begin{equation}
\label{eq:cg-fidelity-slack}
    \rho_{\fiderror}=\alpha\kappa,
    \qquad
    \alpha=\frac{\sqrt{\mathbb{E}\|W\bar\varepsilon\|_2^2}}{\|W\|_2\sqrt{\mathbb{E}\|\bar\varepsilon\|_2^2}},
    \qquad
    \kappa=\frac{\sqrt{S\,\mathbb{E}\|\bar\varepsilon\|_2^2}}{\sqrt{\mathbb{E}\|\varepsilon\|_2^2}},
\end{equation}
and both lie in $[0,1]$ by \cref{eq:cg-fidelity-chain}.
The alignment $\alpha$ measures how much of the averaged error lies along the directions that $W$ amplifies most, its top right singular vectors, and the spatial coherence $\kappa$ measures how much of the error survives averaging over locations, with $\kappa=1$ at the penultimate boundary.
Both factors equal one for an error that is identical at every location and points along the direction that $W$ amplifies most, so the constant $L_g$ in \cref{thm:cg-fidelity} cannot be improved without assumptions on the error.
Empirically, the alignment is nearly the same at both boundaries and for all methods, $\alpha\approx0.26$, so the reconstruction error is spread over many directions rather than concentrated along the directions that $W$ amplifies most.
At layer4, averaging over locations removes a further part of the error, $\kappa\approx0.38$, which explains the additional slack at this boundary.
As $k$ grows, $\recerror$ and $\fiderror$ decrease for all methods, and the ratio $\rho_{\fiderror}$ decreases slightly, on average from $0.105$ to $0.099$ at layer4 and from $0.269$ to $0.260$ at the penultimate boundary between $k=5$ and $k=50$.

\paragraph{Model Completeness}
Model completeness error is an infimum over all heads and cannot be computed exactly.
Every fitted head $\hat\gamma$ is admissible, so its held-out error $\widehat{\modelerror}=\rmse(f,\hat\gamma\circ C)$ estimates an upper bound on $\modelerror(A)$.
We fit two heads on the fitting half.
The first refits the decoded head within its own family, starting from $g\circ D$: when $D$ is affine on each location, the functions $g\circ D$ are exactly the affine maps of the averaged representation $\bar v$, so we fit $\hat\gamma(v)=\hat W\bar v+\hat\beta$ by least squares, and for the nonlinear decoder we fine-tune copies of $D$ and $g$ by gradient descent.
The second adds a learned correction to the decoded head, $\hat\gamma=g\circ D+r$, where $r$ is a multilayer perceptron initialized to zero.
Both heads start from $g\circ D$, and we select them by validation error, so they do not exceed $\fiderror(A)$ beyond sampling error.
We report the smaller of the two as $\widehat{\modelerror}$ and $\hat\rho_{\modelerror}=\widehat{\modelerror}/\fiderror(A)$, which upper-bounds the true tightness of \cref{thm:cg-mce-fidelity-rec}.
Consequently, our results can certify slack in this bound, since $\fiderror(A)-\widehat{\modelerror}$ lower-bounds $\fiderror(A)-\modelerror(A)$, but they cannot certify its tightness.
The gap is modest throughout: $\hat\rho_{\modelerror}$ ranges from $0.91$ to $1.00$.
It is largest for PCA and the SAE, and a nonlinear correction reduces the error by at most $9\%$ over the refitted decoded head.
For $k$-means at the penultimate boundary, $\hat\rho_{\modelerror}=1$: its assignments are nearly one-hot, and $g\circ D$ maps each cluster to the mean of $f$ over that cluster, which is already the best any head can do.

\begin{table}[t]
\caption{Tightness of the attribution bound for the nonlinear autoencoder, mean over three seeds.
Bound is $\fiderror(A)+M\sqrt{\mathbb{E}\bigl[\|C(x)\|_2^4\bigr]}$ with $\fiderror$ on the explained scores, shared by insertion (Ins.), occlusion (Occ.), and gradient-times-input (G$\times$I), and $\rho_{\atterror}$ is the tightness of the attribution bound obtained by combining \cref{eq:cg-attribution-decomposition} with \cref{thm:cg-additivity-curvature}.
The bound, and with it $\rho_{\atterror}$, varies across seeds by up to $40\%$ because $M$ is estimated by sampling, whereas $\adderror$ varies by at most $0.4$.
For the four methods with affine decoders, $\adderror<2\cdot10^{-5}$ and $\rho_{\atterror}=1$ for all three attribution functions.}
\label{tab:cg-attribution}
\centering
\setlength{\tabcolsep}{10pt}
\begin{tabular}{lrrcccccc}
\toprule
 & & & \multicolumn{3}{c}{$\adderror(A,\varphi)$} & \multicolumn{3}{c}{$\rho_{\atterror}\times10^{3}$} \\
\cmidrule(lr){4-6}\cmidrule(lr){7-9}
Boundary & $k$ & Bound & Ins. & Occ. & G$\times$I & Ins. & Occ. & G$\times$I \\
\midrule
 & 5 & 2281 & 1.31 & 1.39 & 0.10 & 3.0 & 3.0 & 2.9 \\
layer4 & 25 & 2381 & 2.34 & 2.92 & 0.06 & 2.9 & 1.9 & 2.4 \\
 & 50 & 2749 & 3.49 & 4.20 & 0.06 & 2.7 & 1.4 & 2.0 \\
\cmidrule(lr){1-9}
 & 5 & 2593 & 0.78 & 0.87 & 0.25 & 3.0 & 2.8 & 2.9 \\
penultimate & 25 & 1909 & 1.70 & 3.32 & 0.13 & 3.5 & 2.2 & 2.9 \\
 & 50 & 1726 & 2.40 & 4.46 & 0.09 & 3.9 & 2.2 & 2.9 \\
\bottomrule
\end{tabular}
\end{table}

\paragraph{Attribution}
We explain the score of the predicted class for each input and evaluate all three attribution functions of \crefrange{eq:cg-concept-attribution}{eq:cg-gradient-input-attribution}.
The theory assumes a scalar output, whereas the explained class varies between inputs.
The bounds still apply, because the triangle inequality in \cref{eq:cg-attribution-decomposition} and the Taylor bound in the proof of \cref{thm:cg-additivity-curvature} hold for each input and every class separately; we therefore compute $\fiderror$ on the explained scores only and take $M$ as the largest curvature constant over the explained classes.
For the four methods with affine decoders, $\gamma=g\circ D$ is a composition of affine maps, since $g$ is affine at both boundaries, and is therefore affine even though the SAE and $k$-means have nonlinear encoders.
In agreement with \cref{prop:cg-affine-attribution}, the additivity error vanishes up to floating-point error for all three attribution functions, $\adderror(A,\varphi)<2\cdot10^{-5}$ compared with $\fiderror\approx6$ on the explained scores, and the attribution error coincides with the fidelity error, so $\rho_{\atterror}=1$.
Only the nonlinear decoder produces curvature.
We estimate $M$ from the largest ratio of gradient differences over sampled pairs of nearby concept representations, which lower-bounds the true constant, so the reported bound is itself an estimate.
The curvature term does not depend on the scale of the concept representations: replacing $E$ by $\lambda E$ and $D$ by $D(\cdot/\lambda)$ for $\lambda>0$ leaves $f_A$ and the attributions unchanged, scales $M$ by $\lambda^{-2}$, and scales $\sqrt{\mathbb{E}\bigl[\|C(x)\|_2^4\bigr]}$ by $\lambda^2$, so the looseness of the bound reflects the curvature of $\gamma$ rather than the units of the concept representations.
For the nonlinear autoencoder (\cref{tab:cg-attribution}), both bounds hold for all three attribution functions but are loose, with $\rho_{\atterror}$ between $0.001$ and $0.004$.
Gradient-times-input is nearly additive, with $\adderror\leq0.25$, whereas insertion and occlusion reach $3.5$ and $4.5$ at $k=50$.
Better fidelity does not imply better attributions: as $k$ grows, $\fiderror$ on the explained scores decreases while $\atterror$ for insertion stays nearly constant, and for occlusion $\atterror$ even falls below $\fiderror$ because the additivity error partly cancels the fidelity error.

\paragraph{Limitations of the Evaluation}
All errors are estimated on a finite held-out sample.
Model completeness error is only bounded from above, and $M$ is estimated from below.
The closed-form Lipschitz constant restricts the evaluation to boundaries with affine prediction heads, and at earlier boundaries $L_g$ would itself have to be bounded.

\end{document}